\newtheorem{theorem}{Theorem}
\begin{document}

\title[Design and Control of Modular Soft-Rigid Hybrid Manipulators with Self-Contact]{Design and Control of Modular Soft-Rigid Hybrid Manipulators with Self-Contact}


\author*[1]{\fnm{Zach J.} \sur{Patterson}}\email{zpatt@mit.edu}
\equalcont{These authors contributed equally to this work.}

\author[1]{\fnm{Emily} \sur{Sologuren}}\email{esolo@mit.edu}
\equalcont{These authors contributed equally to this work.}

\author[2, 3]{\fnm{Cosimo} \sur{Della Santina}}\email{C.DellaSantina@tudelft.nl}

\author[1]{\fnm{Daniela} \sur{Rus}}\email{rus@csail.mit.edu}

\affil*[1]{\orgdiv{Computer Science and Artificial Intelligence Laboratory}, \orgname{MIT}, \orgaddress{\city{Cambridge}, \state{MA}, \country{USA}}}

\affil[2]{\orgdiv{Department of Cognitive Robotics}, \orgname{Delft University of Technology}, \orgaddress{\city{Delft}, \country{The Netherlands}}}

\affil[3]{\orgdiv{Institute of Robotics and Mechatronics}, \orgname{German Aerospace Center (DLR)}, \orgaddress{\city{Wessling}, \country{Germany}}}


\abstract{Soft robotics focuses on designing robots with highly deformable materials, allowing them to adapt and operate safely and reliably in unstructured and variable environments. While soft robots offer increased compliance over rigid body robots, their payloads are limited, and they consume significant energy when operating against gravity in terrestrial environments. To address the carrying capacity limitation, we introduce a novel class of soft-rigid hybrid robot manipulators (SRH) that incorporates both soft continuum modules and rigid joints in a serial configuration. The SRH manipulators can seamlessly transition between being compliant and delicate to rigid and strong, achieving this through dynamic shape modulation and employing self-contact among rigid components to effectively form solid structures. We discuss the design and fabrication of SRH robots, and present a class of novel control algorithms for SRH systems. We propose a configuration space PD+ shape controller and a Cartesian impedance controller, both of which are provably stable, endowing the soft robot with the necessary low-level capabilities. We validate the controllers on SRH hardware and demonstrate the robot performing several tasks. Our results highlight the potential for the soft-rigid hybrid paradigm to produce robots that are both physically safe and effective at task performance.}

\keywords{Soft-rigid robots, Control}



\maketitle

\section{Introduction}\label{sec1}

Soft robots promise safe applications in human-centered environments \cite{abidiIntrinsicSafetySoft2017,Rus2015}; however, their potential utility in gravity-dominated settings (i.e. excluding aquatic or outer-space environments), is constrained by their load-bearing capacity. 
This observation aligns with the fact that very few terrestrial animals utilize entirely deformable or soft structures for manipulation \cite{kim_soft_2013}. Elephant trunks and giraffe tongues are notable examples often referenced in the robot literature \cite{mcmahan_robotic_2004}. More frequently, terrestrial (and many non-terrestrial) morphologies utilize rigid and soft tissues in tightly integrated arrangements \cite{rockwellComparativeMorphologyVertebrate1938,robertsFlexibleMechanismsDiverse2011,blobDiversityLimbBoneSafety2014,desatnik2023soft,donoghue2002origin}. 

Recently, researchers within the robotics community have increasingly been taking inspiration from these materially complex terrestrial morphologies and developing novel soft-rigid hybrid robots using techniques developed by the soft robotics community in the past decade \cite{bern_simulation_2022,obayashi_control_2023,culha_design_2017,patterson2023modeling}.

Taking inspiration from this integration of rigid and soft materials, we build upon concepts such as the 2D manipulator in Bern et al. \cite{bern_contact-rich_2022}. This cable-actuated foam manipulator was capable of modulating its stiffness on a continuum between "soft" and "rigid" states by compressing its rigid plates, utilizing both geometric deformation and self-contact between adjacent rigid plates to become stiffer. However, this manipulator concept could not support its own weight under gravity and was limited to a planar setting. Additionally, the frequency of self-contact makes the application of classical robot and soft robot control solutions difficult, and feedback control methods for such a system have not been demonstrated.

In this work, we present a new design for stiffness modulation to 3D. The manipulator consists of an arbitrary number of cable-actuated foam modules that can bend and compress, along with servo motors in between modules. The manipulator is capable of supporting itself under gravity and can also manipulate significant additional loads. The continuum modules provide a large and highly redundant workspace. See Figure 1
for an image of the robot, an overview of some of its main features, and an overview of our modeling and control methodology.

The concept of stiffness modulation in robotics is well studied - there are important examples in both the rigid-bodied robotics literature \cite{schiavi_vsa-ii_2008,wolf_variable_2016} and in the more recent soft robotics literature \cite{huh_design_2012,park_design_2014,stella_prescribing_2023,bruderIncreasingPayloadCapacity2023,baines2022multi}. A comprehensive review of all stiffness modulating robots is outside of the scope of this paper. The type of stiffness modulation present fits with the nomenclature characterized by Yang et al. \cite{yang_principles_2018} as a combination of antagonistic stiffness tuning and segment locking, both under the category of structure-based stiffness tuning (as opposed to methods that utilize particle jamming or material phase change). Because these distinct forms of stiffness modulation are present in our design results in distinct qualitative features, most notably that the bending stiffness is continuously controllable through coordinated compression using the cables, but only up to and until the plates make contact. At that point, the rigid plates act like a discrete, hard constraint and essentially imbues infinite stiffness in the direction of the bend.

Together with this new robotic concept, we present a novel control methodology that can equip this structure with the capability of moving precisely in space and exploiting its embodied intelligence. When plates in the foam modules are not making contact with each other, the modules can be appropriately controlled using existing recent results from the soft robot control literature, such as the soft PD+ controller \cite{della_santina_model_2021}. However, the introduction of self-contact by the plates creates non-smooth dynamics that are not readily addressed by the standard framework. While torque control of the motors can mitigate this effect to some degree, it still produces undesirable performance and, at worst, dangerous instability in the case of a large enough proportional feedback term. Therefore, we also present a novel control method for controlling systems experiencing self contact. The controller is essentially an extension of the soft PD+ controller \cite{santibanezGlobalAsymptoticStability1999, della_santina_model-based_2020} with an additional term to take the contact condition into account. We also present an impedance controller for regulating interactions with the environment. Finally, we demonstrate the robot on several tasks the require both flexibility and load bearing capability.

\noindent \begin{figure}
\centering
\includegraphics[width=0.95\textwidth]{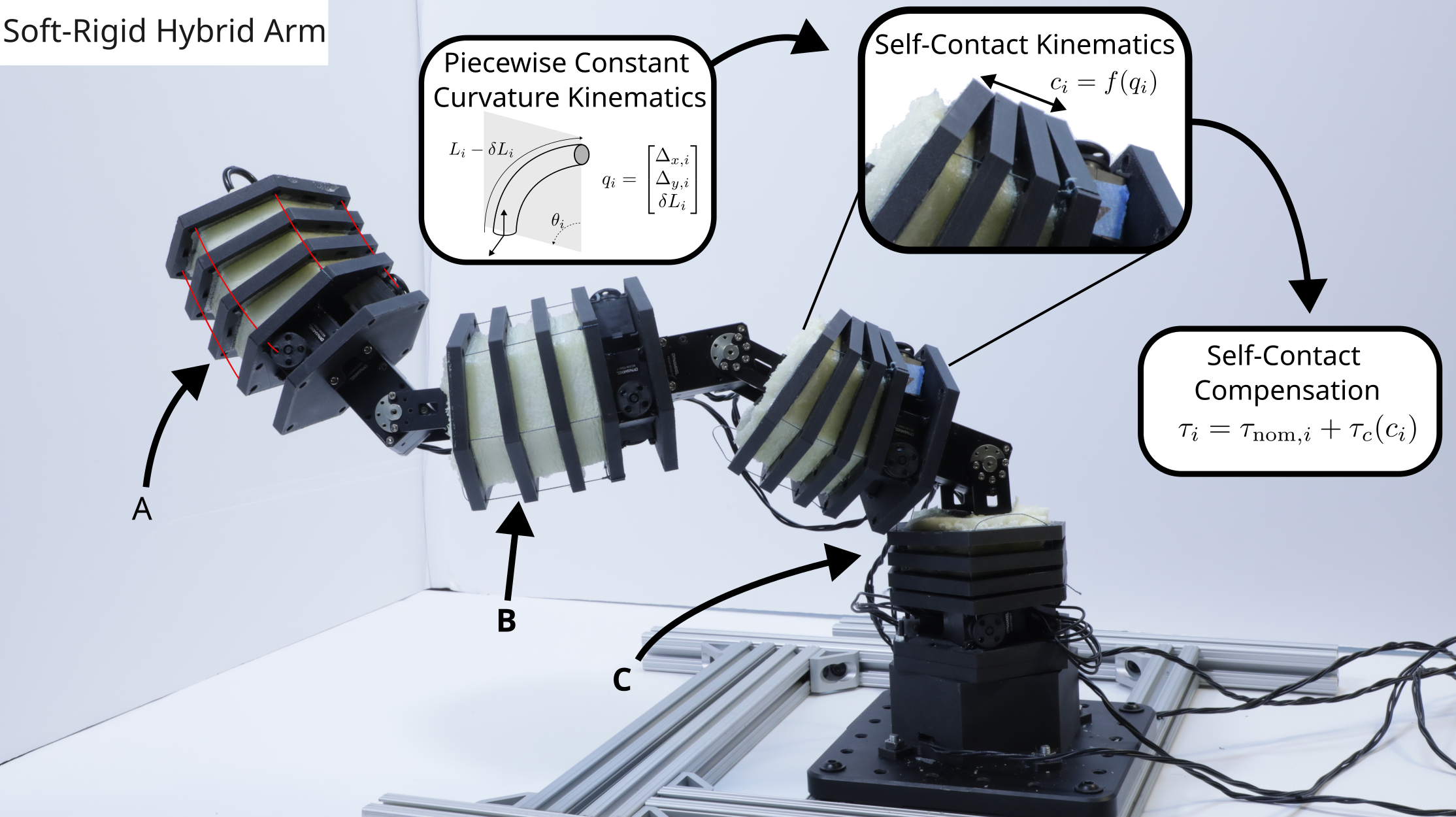}
\caption{A modular soft-rigid hybrid arm that can operate as a series of rigid bodies or soft segments. The figure shows a control workflow that incorporates contact compensation based on forward kinematics calculations. (a) Each module is controlled by a set of three tendons (visible ones on the first module are highlighted in red). (b) When a module is uncompressed, the module acts like a soft segment, particularly under load. (c) When the module is completely compressed, it acts as a de facto rigid body.}\label{fig:fig1}
\end{figure}

\section{Results}\label{sec2}

\subsection*{Soft-rigid hybrid modules}
 Our robot is modular in its design and is composed of both continuum modules and rigid joints. Figure 2
 shows an overview of the design of the modules and the robot. Each continuum module is composed of rigid 3D printed plates embedded in highly deformable rubber foam. The unactuated structure is free to bend, compress, shear, and twist. To actuate each module, we use three motor driven tendons. With this actuation methodology, we are able to actively bend and compress the modules. 

 \noindent \begin{figure}
\centering
\includegraphics[width=0.6\textwidth]{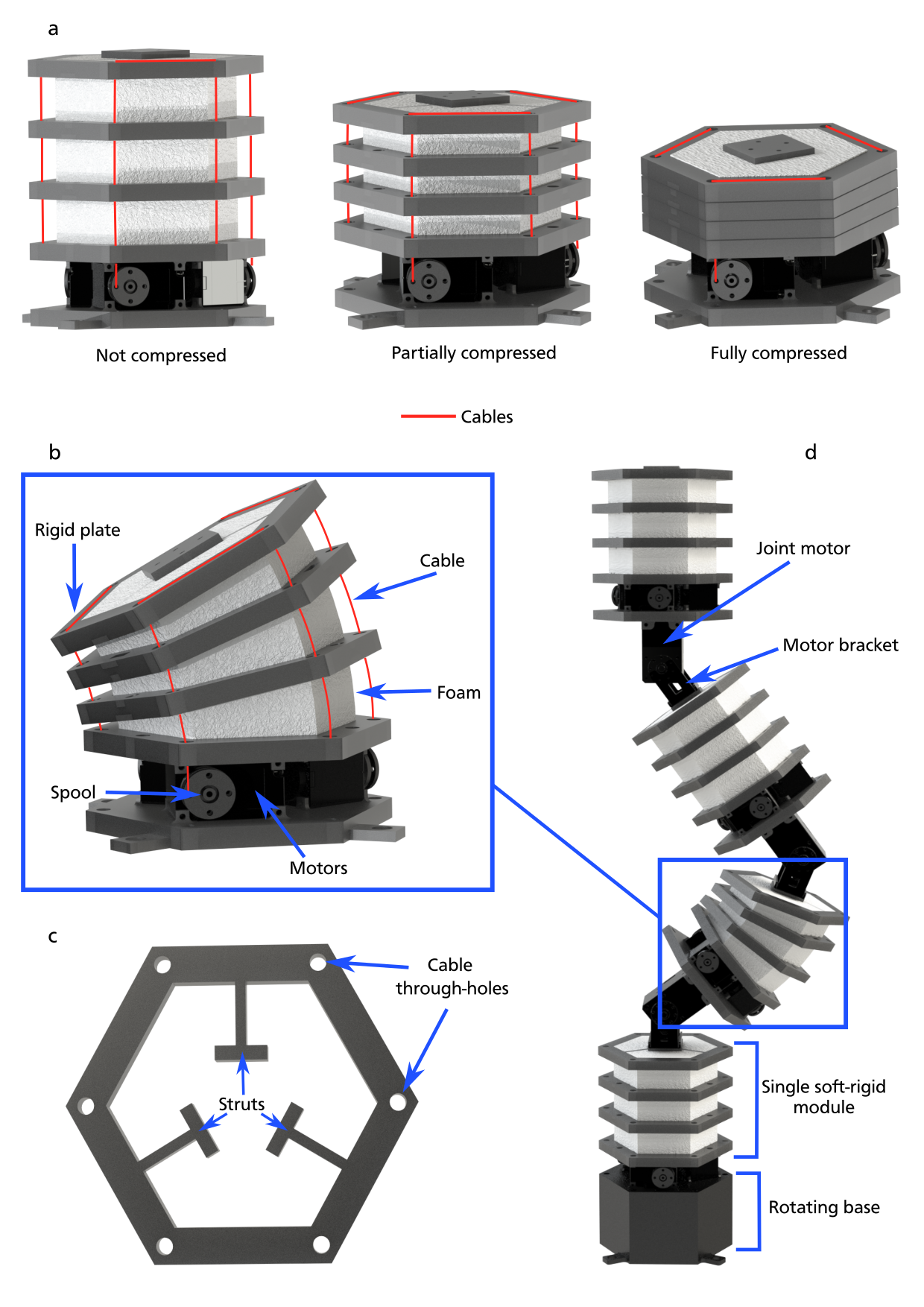}
\caption{\textbf{Design overview of the soft-rigid manipulator:} (a) Panel visualizing a soft-rigid module transitioning between its soft and rigid states. As the cables contract, the rigid plates are pushed closer together, increasing overall stiffness. (b) A labelled soft-rigid module that relies on three dynamixel motors, spools, and cables to achieve various levels of stiffness and shape. (c) A single 3D printed hexagon plate, featuring its struts for better foam adhesion and through-holes for cables. (d) A full body rendering of the soft-rigid arm, composed of four soft-rigid modules.}\label{fig:design}
\end{figure}

 The most interesting feature of these modules is that the effective stiffness of the structure is dependent on the shape. When the module is unactuated (Fig. 
 2a), it is free to deform and is quite compliant (we characterize the stiffness quantitatively in the sequel). On the other hand, when the module is fully compressed, it is essentially rigid. The module can also be continuously modulated between these two states. 
During active bending, the modules can bend until the rigid plates make contact (see Fig. 
2b for a bent module). The magnitude of the bend at which the plates make contact obviously decreases the more the module is compressed. 

To assemble a robot, several design choices could be made. One options is to stack the modules directly on top of each other, resulting in a traditional kinematic chain of continuum modules \cite{robinsonContinuumRobotsState1999}. Instead, we chose to stack modules with traditional rigid joints in between and with an additional joint at the base to rotate the robot about the vertical axis (see Fig. 
2d). The height of the entire manipulator is 0.68m. 

\subsection*{Robot stiffness variation and workspace}

To characterize the bending stiffness of a soft-rigid module, we perform a simple suspended weight experiment wherein we suspend a weight from a horizontal module and measure the displacement between the deformed and undeformed module. We make the assumption that the module undergoes pure bending. While this neglects what are likely significant shear deformations, we are primarily interested in the change in stiffness of a module as it is compressed. We also utilize a standard linear beam assumption to approximate the stiffness, resulting in a stiffness calculation of $k = mg/\Delta h$, where $k$ is the bending stiffness, $m$ is the mass, $g$ is gravity, and $\Delta h$ is the vertical displacement. The results of our experiments and an inset depicting the methodology are shown in Fig. 
3a. There are several interesting characteristics to point out here. First, as the module compresses but before it is fully compressed, we see a modest change; the stiffness approximately doubles. However, when the module fully compresses, the stiffness increases by approximately 30 times the baseline. This is due to the discrete contact between the plates.

\noindent \begin{figure}
\centering
\includegraphics[width=0.75\textwidth]{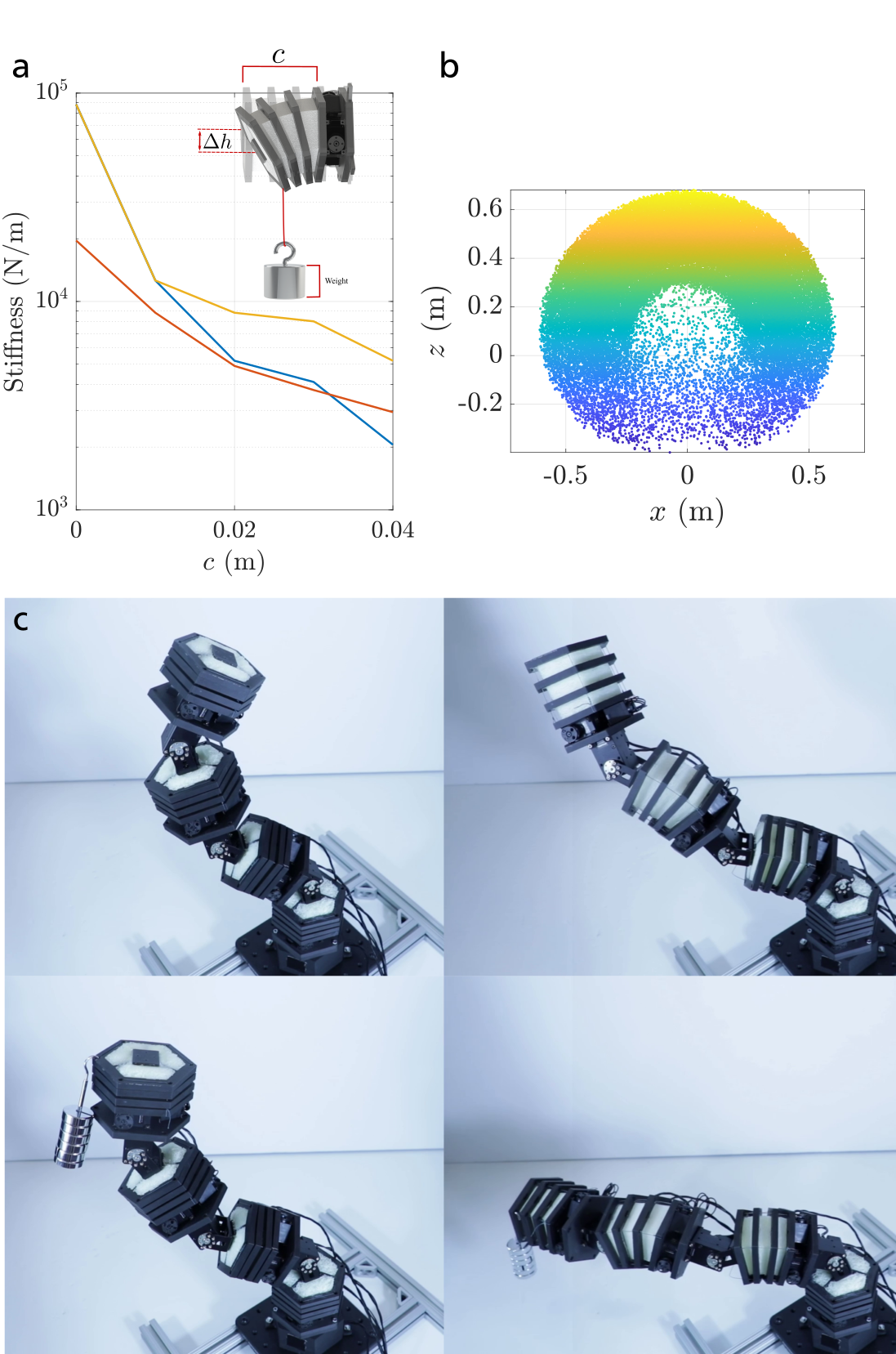}
\caption{\textbf{Characterization of Soft-Rigid Module:} (a) Plot of the bending stiffness test results. Inset is a rendering of the test performed on a module of the soft-rigid arm. (b) Planar slice of the manipulator workspace (which is radially symmetric). (c) Demonstration of manipulator supporting a 300 gram mass while the modules are in a rigid state (left) and a flexible state (right). }\label{fig:characterization}
\end{figure}

We next characterize the workspace. To do this, we use the forward kinematics of the end effector of the robot to generate a point cloud of reachable points by randomly varying the configuration of the robot. Figure 
3b shows a slice of the point cloud. We note that much of the workspace is highly redundant because of the nature of continuum robots \cite{Jones2006}. We also observe that the center of the workspace, while reachable, is noticeably more sparse, implying that there are far less configurations that reach this area. This also makes intuitive sense; to reach these points the robot must be near fully compressed, greatly reducing the redundancy of the continuum modules. 

Finally, to give a qualitative demonstration of the performance of the entire robot under load for different configurations, we performed a simple experiment in which we suspended 300 grams from the robot when it was fully compressed (maximum stiffness) and fully elongated (maximum flexibility). The state of the joint motors was set to be identical in both cases. We observe the result of the experiment in Fig. 
3c. On the left, we see that the rigid robot moves slightly from the unloaded point. On the right, we see that the weight causes the robot to deform until it hits the ground.

\subsection*{Modeling and feedback control}
Because the rigid plates constrain our modules from undergoing very large displacements, we use the Piecewise Constant Curvature (PCC) kinematic assumption to parametrize the modules' states \cite{Jones2006}. We utilize the formulation discussed in \cite{della_santina_improved_2020}, which is singularity free. Thus, for each module, the state is ${q}_{\mathrm{mod},i} = [\Delta_{x,i}, \Delta_{y,i}, \delta L_{i}]^T$. Each rigid joint is modeled using a standard rotational joint, represented by $\theta_i$ \cite{murray_mathematical_1994}. We can then stack the variables to form our total state as ${q} = [\theta_1, {q}_{\mathrm{mod},1}, ..., \theta_n,{q}_{\mathrm{mod},n}]^T$, where $n$ is the total number of modules. The dynamics of the robot are then derived, following \cite{della_santina_improved_2020}, in the following form:
\begin{equation}\label{eq:dynamics}
    M({q}){\ddot q} + C({q},{\dot q}){\dot q} + G({q}) + K({q}) + D{\dot q} = A({q}){\tau} + J^T f_{\mathrm{ext}},
\end{equation}
where $M({q})$ is the inertia matrix, $C({q},{\dot q})$ is the Coriolis matrix, $G({q})$ is gravitational force, $K({q})$ is the elastic force, $D$ is the damping matrix, $A({q})$ is the input matrix, ${\tau}$ is the input vector, $J$ is the end effector Jacobian, and $f_{\mathrm{ext}}$ is a force on the end effector. To calculate the inertial, Coriolis, and gravitational terms online, we implemented Featherstone's recursive algorithms \cite{featherstone_rigid_2014}. The elastic force, $K({q})$ is assumed to be linear and thus takes the form $K{q}$, where $K$ is the diagonal Hessian of the elastic potential. In this case, the terms of the diagonal corresponding to rigid joints are zero and the others are calibrated based on system response. Damping terms in diagonal matrix $D$ are estimated based on the textbook system identification method as in e.g. \cite{sciavicco2012modelling}.
In order to drive the system (\ref{eq:dynamics}) to trajectories in the state $ q(t)$, we chose to utilize a PD+ controller adapted for soft robotics, similar to those described in \cite{della_santina_model_2021}. The controller takes the form 
\begin{equation}\label{eq:state_c}
    {\tau} = A^{-1}(M{\ddot q_d} + C{\dot q} + G({q}) + K{q_d} + D{\dot q_d} + K_{\mathrm P}({q_d} - {q}) + K_{\mathrm D}({\dot q_d} - {\dot q}) + {F}_c),
\end{equation}
where ${q_d}(t)$ is the desired trajectory, $K_{\mathrm P}$ is the proportional feedback gain matrix, $K_{\mathrm D}$ is the derivative feedback gain matrix, and ${F}_c$ is a contact compensation term that we will now describe. 
Our objective with ${F}_c$ is to prevent the controller from causing interpenetration of the rigid plates which can result in the destruction of the robot. This problem could be addressed at the planning level by preventing dangerous desired configurations by checking the inverse kinematics, but this incurs additional computational cost. We could also use optimization-based methods such as control barrier functions (CBFs) \cite{patterson2023safe} or model predictive control (MPC) \cite{koenemann2015whole}. These approaches have notable drawbacks that motivate another option. CBFs can result in an infeasible QP for a large number of constraints \cite{xiao2022sufficient}. Using MPC to solve the control problem for general nonlinear systems is an open question \cite{diehl2009efficient}. Here we take inspiration from Khatib \cite{khatib1986real} and write our self-contact compensation as a term in the low level torque controller. We could choose a variety of functions to accomplish the goal of preventing dangerous self-contact. One line of thinking is to gradually attenuate the terms that would cause the controller to push past the contact point, specifically $K{q_d}$ and $K_{\mathrm P}({q_d} - {q})$. To accomplish this, we design ${F}_{c,i}$ for a single module as
\begin{equation}
    {F}_{c,i} = \sigma(c_i({q}))(K{q_d} + K_{\mathrm P}{q_d} - K_{\mathrm P}{S}({q}),
\end{equation}
where $c_i({q})$ is the function for the distance between the plates derived from forward kinematics and $\sigma(c_i({q}))$ is a sigmoid function that saturates as $c_i \rightarrow 0$:
\begin{equation}
    c_i = 2(d\frac{L_0 + \delta L_i}{\sqrt{\Delta_{x,i}^2 + \Delta_{y,i}^2}} - d)\sin(\frac{\sqrt{\Delta_{x,i}^2 + \Delta_{y,i}^2}}{6d}),
\end{equation}
\begin{equation}
    \sigma(c) = -\frac{e^{-k_c c}}{e^{-k_c c} + 1}.
\end{equation}
$k_c$ determines the steepness of the sigmoid's slope about $c=0$. 
The term ${S}({q})$ saturates the feedback contribution to ${F}_{c,i}$
\begin{equation}
    {S}({q}) = [\phi(\Delta_{x,i}), \phi(\Delta_{y,i}), \phi(\delta L_{i})]^T,
\end{equation}
where
\begin{equation}
    \phi(\Delta_{x,i}) = 2\Delta_{x,\mathrm{max}} \frac{1-e^{-\Delta_{x,i}/\Delta_{x,\mathrm{max}}}}{1+e^{-\Delta_{x,i}/\Delta_{x,\mathrm{max}}}}
\end{equation}
and $\Delta_{x,\mathrm{max}}$ is a positive constant. ${F}_c$ can then be constructed as 
\begin{equation}
    {F}_c = [0, {F}_{c,1}, ..., 0, {F}_{c,n}]^T,
\end{equation}
where the null terms correspond to the motor states $\theta_i$. The closed loop system of (\ref{eq:dynamics}) and (\ref{eq:state_c}) can be shown to be uniformly bounded for trajectories ${q_d}(t)$, ${\dot q_d}(t)$ (see the SM). We emphasize that this controller is designed to reach ${q_d}$ when the trajectory will not cause interpenetration while converging to a different trajectory before interpenetration can occur. Thus, we should observe that the controller performs well in the both the tracking and the set point problem before $c( q) \rightarrow 0$.

We test performance of controller (\ref{eq:state_c}) in step response, disturbance rejection, and trajectory tracking tasks. Results for step response and disturbance rejection are shown in Fig. 
4a-c. We observe that most of the states successfully converge to their set points with the exception of $\delta L_1$, for which the set point would have caused interpenetration. Fig. 
4d-f show the results for the trajectory tracking task. We observe that tracking for the joint motors is near perfect (Fig. 
4E), whereas tracking for the module states can sometimes lag the desired state (Fig. 
4d), likely due to model mismatch and small gains. For videos of the disturbance rejection and trajectory tracking, see Videos V1 and V2 respectively.

\noindent \begin{figure}
\centering
\includegraphics[width=0.99\textwidth]{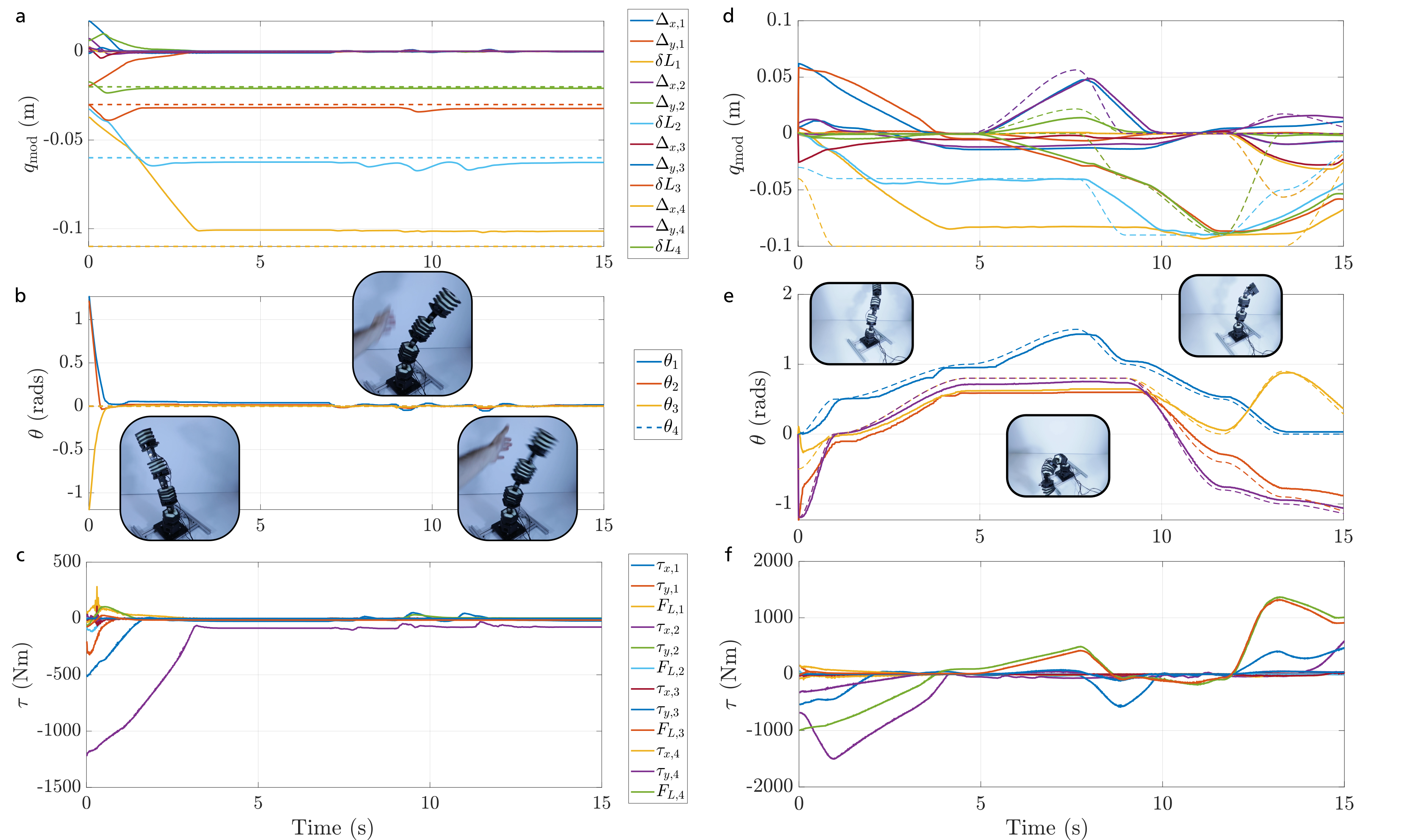}
\caption{\textbf{Disturbance Rejection and Trajectory Tracking:} Step response and disturbance rejection for (a) modules and (b) joints. The inset images are snapshots of the robot as it is perturbed from the set point. (c) Torques output by the controller. Trajectory Tracking for (d) modules and (e) joints. The inset images are snapshots during the trajectory. (f) Torques output by the controller.}\label{fig:step}
\end{figure}

We also explore controlling the manipulator using impedance control \cite{lynch2017modern}. Depending on the application, such controllers may be more desirable than configuration space control due to the fact configuration space control necessarily stiffens the system with respect to interactions \cite{della_santina_model-based_2020}, whereas impedance control simulates a mass spring damper system which allows the user to choose the rendered stiffness. Such an approach therefore complements the flexible nature of the manipulators. We specify the following impedance controller, taking inspiration from various sources \cite{della_santina_model-based_2020,khatibUnifiedApproachMotion1987,ott2008cartesian}, as
\begin{equation}\label{eq:imp}
\begin{split}
    {\tau}_{\mathrm{I}} &= A^{-1}(\mu({x},{\dot x}){\dot x_d} + J^TJ^{+T}(K{q} + D{\dot q}) + J^T (\Lambda {\ddot x_d} + K_c({x_d} - {x}) + D_c({\dot{x}_d} - {\dot x}))\\ 
    &+ (I - J^T J^{+T})(K_{P}({q_d} - {q}) - K_{D}{\dot q}),
\end{split}
\end{equation}
where ${ \dot x} = J{\dot q}$ is the Cartesian velocity of the end effector, ${x_d}$ is the desired Cartesian position, $K_c$ is the desired Cartesian stiffness, $D_c$ is the desired Cartesian damping, $\mu({x},{\dot x})$ is the Cartesian coriolis matrix, and $\Lambda$ is the Cartesian inertia matrix \cite{khatibUnifiedApproachMotion1987} defined as 
\begin{equation}
    \Lambda = (JM^{-1}J^T)^{-1},
\end{equation}
and $J^{+T}$ is the dynamically consistent pseudo-inverse
\begin{equation}
    J^{+} = M^{-1}J^T\Lambda.
\end{equation}
Assuming stable nullspace dynamics (see \cite{ott2008cartesian} for a discussion on this topic), this controller is asymptotically stable about ${x}={x_d}+K_c^{-1}f_{\mathrm{ext}}$, ${\dot x} = 0$ in the presence of external force $f_{\mathrm{ext}}$.

We demonstrate the impedance controller in a set point task subject to disturbances. For a better idea of the performance of the controller, see Video V3. Results are shown in Fig. 5
. Fig. 
5a shows the Cartesian position of the system compared to the desired trajectory when the system is subject to repeated disturbances. In this case, the median $L^2$ norm of the errors is roughly 4 centimeters over the length of the trial (compared to the robot length of 68 centimeters). When not disturbed, the Cartesian error is approximately 5 millimeters (see Fig. S1). Figure 
5b shows the motor torques. Figures 
5c and 
5d show the evolutions of the module states and joint states respectively. Note that while the Cartesian position stays relatively close to our desired position, the configuration state of the system frequently undergoes large changes in response to the external disturbances. Figure 
5e shows still images of the experiment. 

\noindent \begin{figure}
\centering
\includegraphics[width=0.99\textwidth]{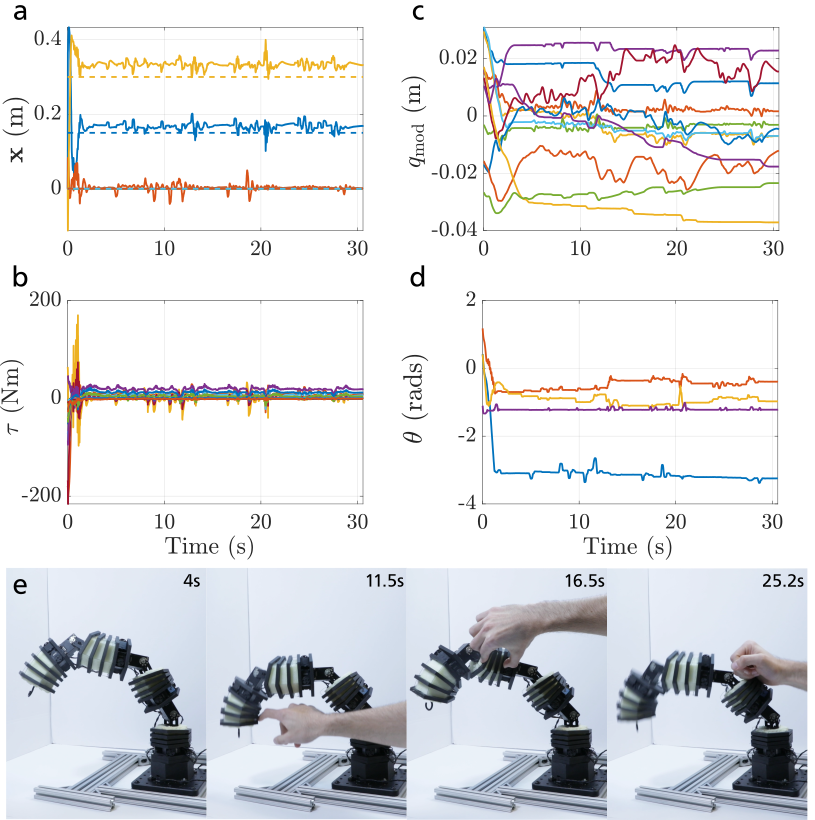}
\caption{\textbf{Impedance Control:} Depicts an experiment in which impedance controller (\ref{eq:imp}) renders a desired spring damper system at the end effector compared to desired trajectory. (a) Cartesian coordinate of the end effector. (b) Control inputs. (c) Module state trajectory. (d)  Joint motor state trajectory. (e) Snapshots from the trial.}\label{fig:impedance}
\end{figure}

\subsection*{Demonstrations}

To show the kinematic and load bearing functionality of the arm, we perform two demonstrations. For these experiments, the manipulator is outfitted with a hook to serve as a simple end-effector that can be used to pick up objects. For each experiment, the robot must maneuver its way through an "obstacle course" that requires it to utilize its continuum kinematics to reach the goal. An image of the obstacle course and robot is shown in Fig. 
6a. As this is a demonstration of the hardware functionality, we want to emphasize that these trajectories are manually specified offline, i.e. the robot is not autonomously generating its own trajectories. 

In the first task, we have the robot maneuver through several holes to grab a red mug. This task requires the robot to utilize the continuum bending capability to successfully reach through the holes. Snapshots from the experiment are shown in Fig. 
6. In the second task, we demonstrate the robotic arm's ability to recover cup of 200 grams of weights while retaining enough kinematic redundancy to pass through a section of the obstacle course. To pick up and maneuver the cup of 200 grams of lab weights, the manipulator is selectively stiffened to support the weight. Snapshots from the cup of weights experiment are shown in Fig. 
7. Videos of both demonstrations can be found in Videos V4 and V5.

\noindent \begin{figure}
\centering
\includegraphics[width=1.0\textwidth]{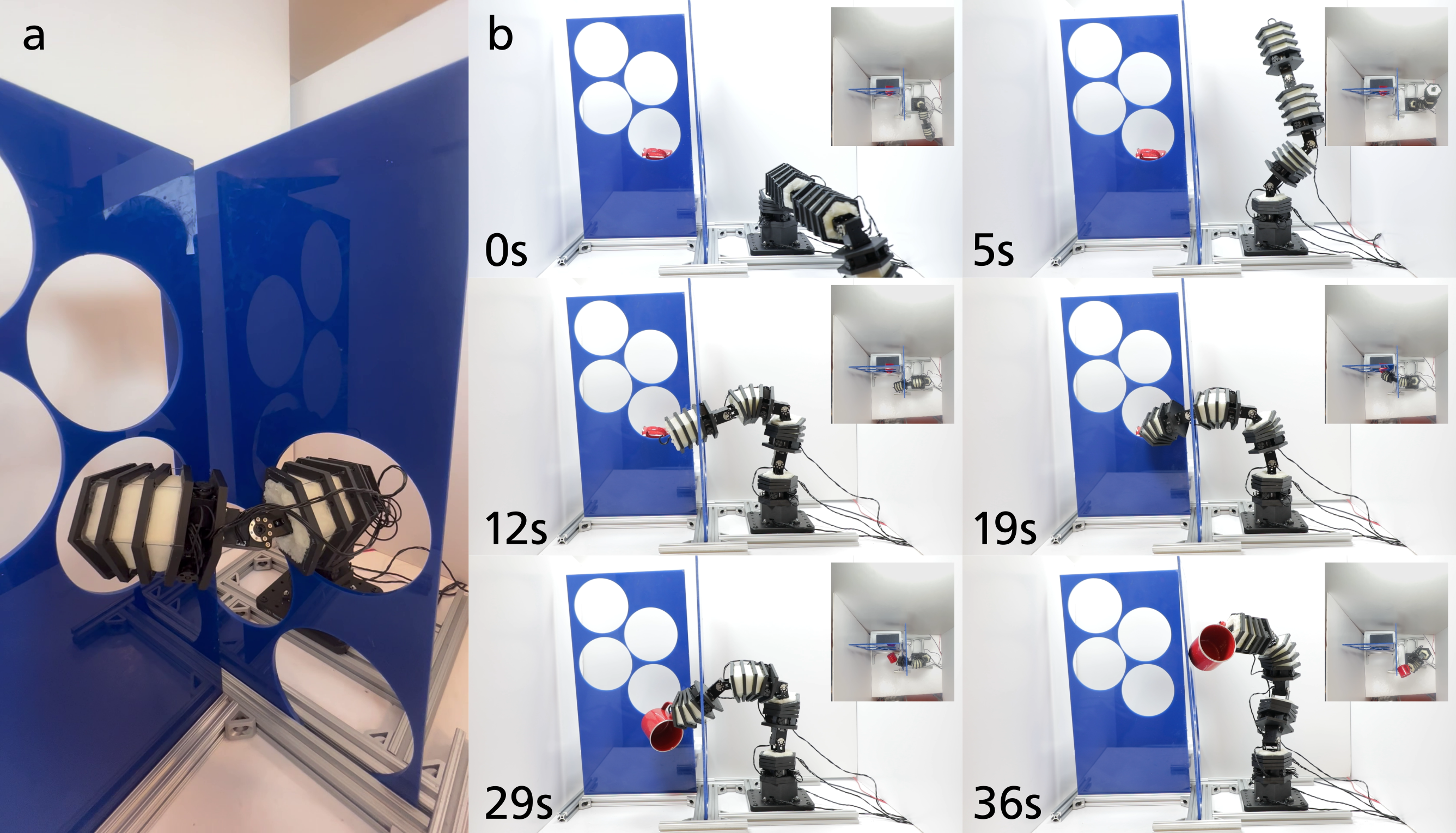}
\caption{\textbf{Experimental Results of Red mug Task:} (a) Demonstration of the capability of the soft-rigid manipulator to traverse a kinematically difficult obstacle course, utilizing both rigid body and soft body deformation to reach a goal state. (b) Snapshots from a demo of the soft-rigid manipulator retrieving a red mug from the obstacle course. The insets are alternative angles from a different video.}\label{fig:mug}
\end{figure}

\noindent \begin{figure}
\centering
\includegraphics[width=1.0\textwidth]{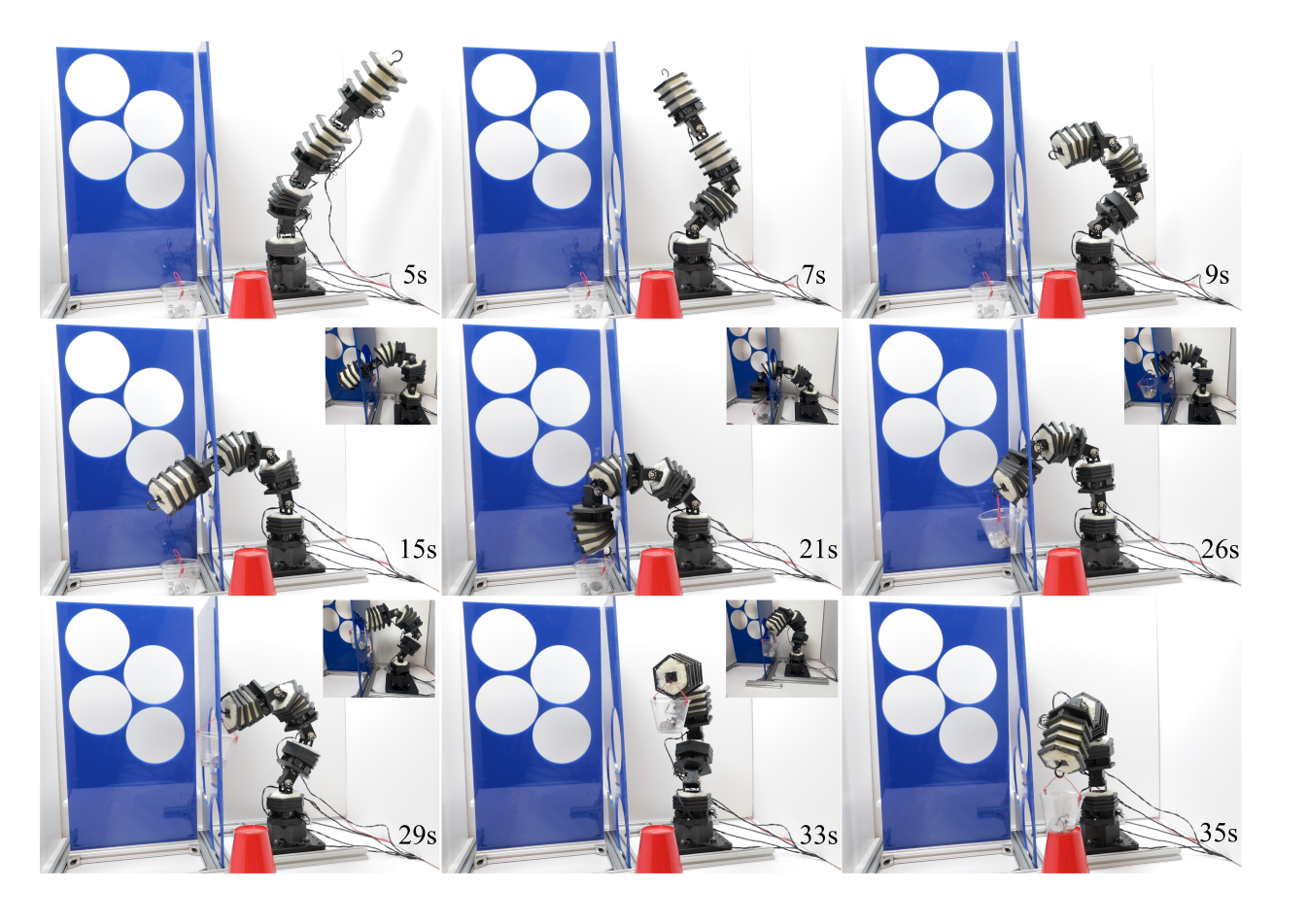}
\caption{\textbf{Experimental Results of Weighted Cup Task:} Features the soft-rigid manipulator going through the obstacle course to pick up a cup with around 200 grams of lab weights. The insets are alternative angles from a different video.}\label{fig:weights}
\end{figure}

\section*{Discussion}
We presented a novel soft-rigid hybrid modular robot arm designed as a series of soft and rigid segments. As we demonstrated (Fig. 
3), each module can change its stiffness, first gradually by contracting and then drastically as the plates make contact. This discrete phenomenon imbues the robot with an approximately "rigid mode" and a variable stiffness "soft mode." We demonstrated that a PD+ controller modified with a term to deal with self contact can dynamically control the manipulator in configuration space on hardware (Fig. 
4). It was possible to exclude the contact term, but this also resulted in the feedback portions of the controller over-stressing the modules and, in a few cases, destroying parts. We also showed an impedance controller for compliant control of the end effector (Fig. 
5). In the SM, we prove boundedness for our new state controller and in our experiments, both controllers are robust to disturbances. Finally, we offer demonstrations of the robot's ability to maneuver objects through an obstacle course, taking advantage of the kinematic redundancy offered by its continuum nature while still being capable of bearing large loads. These tasks would be difficult for both traditional rigid manipulators, because of kinematic difficulties of getting through a complex obstacle course, and soft robots, which would have a hard time precisely manipulating the loads.

The approach as presented has several limitations that are worth discussing. First, our model-based control algorithms utilize the Piecewise Constant Curvature model. While we show that this generally works well, we note that the assumption breaks under large shear loads which can occur when the robot is in "soft mode" and holding a load against gravity. The assumption also breaks under large torsional loads. We made the choice to use PCC because of its relative simplicity and ubiquity in the literature, but to deal with torsion and shear, a less restrictive (but more complex) model such as the Cosserat rod \cite{boyerDynamicsContinuumSoft2021,tillRealtimeDynamicsSoft2019} is required. Utilizing a more complex model would also bring up important control issues. The physical robot cannot directly control shear or torsion, although they are coupled to bending and compression, and thus we would want our controller to deal with the underactuation present in our model. With notable exceptions, underactuated control for soft robots is relatively understudied.

The design of the robot itself also possesses several key limitations. First, the robot's stiffness is coupled to its shape. In particular, as we observed in Figure 
3b, when the robot reaches its stiffest states (fully contracted) it reduces the redundancy of the workspace. This is a feature shared with many stiffness tunable robots in the literature that use similar stiffness modulation strategies \cite{huhDesignAnalysisStiffness2012}. We note that, rather uniquely, our method utilizes the discrete self contact to enable a nonlinear, order of magnitude jump in stiffness. Another limitation of the platform is the design choice to have the three tendon-driving motors physically on the modules, increasing the mass of each module. Placing the motors at the base of the robot as in e.g. \cite{guanTrimmedHelicoidsArchitectured2023} would decrease the weight of each module by approximately 100 grams, thereby significantly increasing the payload of the robot.


\section{Methods}\label{sec11}

\subsection*{Fabrication}
The hexagonal plates, brackets, and rotating base are 3D printed with Onyx material. To cast the foam core of each module, we use a 3D printed mold that ensures a consistent 25mm displacement between each plate. We then pour 20mL of Smooth-On Flex Foam-iT! III solution previously mixed with a 2:1 ratio into the cavity of the mold and allow it to cure for 2 hours. To ensure the foam only adheres to the struts and inner parts of the plates, we brush protected parts of the plates with Ease Release 2831.    
\subsection*{Design}
The manipulator is comprised of four soft-rigid modules. Each module is made up of a several 3D printed plates made from Markforged Onyx® carbon fiber filled nylon and a deformable foam core made from Smooth-On Flex Foam-iT! III Flexible Polyurethane Foam. The foam core has a volume of 307 $cm^3$. The hexagon-shaped plate has struts that protrude into the foam to provide more surface area for the foam to adhere to (refer to Fig 
2c). To compress the plates, we use three Dynamixel XC330-M288-T motors which each have a 3D printed spool connected to the horn of the motor. As shown in Fig. 
2a, the modules rely on cable-driven actuation. When the cables are loose, the module is soft and compliant. As the motors tighten the cables they pull the hexagon plates closer together, compressing the foam. This compression eventually leads the hexagon plates to make complete contact, making the module essentially rigid. To connect each module, we have an XM430-W350-R between each module, with the bottom of the motor screwed to the bottom of one module, and the top of the XM430 screwed onto a FR12-H101K bracket mounted onto the top of the second module. The bottom-most module of the arm is mounted onto a 3D-printed rotating platform actuated with an XM430-W350-R motor to allow for the robot to rotate about its z-axis. The length of the entire manipulator, including the rotating base, stands at 0.68m. 
\subsection*{Electronics and Code}
In order to control both the module's motors and the joint motors in between, we use two U2D2 Dynamixel Power Hub Boards, along with the corresponding USB serial communication converter (U2D2) that connects to the laptop via a micro-USB cable. The XC330s are daisy chained together using 3-pin JST cables-- the XM430s are daisy chained the same way. While both the XC330 and XM430s run on 12V and theoretically could be controlled using a single Power Hub, for simplicity of cabling we chose to designate a power hub for each motor type. If stacking more than three modules for the arm, a 3P JST Expansion board from Dynamixel can also be used to accommodate the additional motors. It is important to note that only around 6 XC330s can be daisy-chained at a time, so we recommend daisy-chaining a maximum of two modules together to a port on the power hub.

We provide an open-source repository that uses Dynamixel's SDK API in Python to control all motors of the soft-rigid arm via packet communication. We have verified this repository works for both Windows and Linux. 

To calculate the state of our manipulator in software, we must keep track of all cable lengths. We accomplish this by relying on the encoders embedded in the motors to measure changes in angle. Both the XC330s and XM430s have an embedded absolute encoder with a 12-bit resolution from AMS. During calibration, we contract the cables to the point where the cables slightly start initiating module contraction. We then measure the distance between each of the motor's spools to the top of the module and refer to this as the neutral or home cable lengths. Using the Dynamixel SDK API, we then call a reading of all the motor's angle measurements and store them as reference angles. Each reference angle can then be used to keep track of changes in angle as the module contracts. We rely on the change in angle measurement to calculate the cable lengths as $l_{i} = l_{0} - (\Delta \theta_{i} r)$, where $l_i$ is the current length of cable $i$, $l_0$ is the original cable length, $\delta \theta$ is the change in motor angle, and $r$ is the radius of the spool.

In order to be used in a control loop, the packet communication between the computer and motors must run at a fast enough rate. We accomplish this by using Dynamixel's Protocol 2.0 option, setting the baud-rate to at least 2 Mbps, as well as setting the USB latency timer on the computer's ports to 1 ms. This allows our control loop to run at 100-200 Hz. It is also required to use the Bulk Read and Bulk Write methods from Dynamixel's SDK such that we can simultaneously read and send packets to both the module and joint motors. 
\subsection*{Characterization}
As seen in Fig. 
3, we demonstrate the relationship between stiffness and cable contraction. We conduct each trial by setting all three cables at the same set length and hanging a lab weight by the end of the module. This was done for five  cable lengths and three weights. We calculate the effective stiffness of the module by measuring the displacement $h$ (see Fig. 
3) at the end of the module for each trial. Between trials we allowed the module to rest for 3 minutes to allow it to return to its complete decompressed state, reducing the effects of memory on the foam.

\subsection*{Obstacle Course Experiments}

Both experiments use the same two plates of 30.5x61 mm acrylic held up by several bars of 8020 aluminum. Each plate was laser cut with four holes of 152mm in diameter. The two acrylic plates are aligned at a 90 degree angle from each other. We also attach a 31.5mm long hook onto the end of the fourth soft-rigid module of the arm to allow the manipulator to pick up both types of cups in the experiments.
For the red mug experiment, we placed a metal red mug taken from the YCB dataset \cite{calli2015benchmarking} behind the bottom right circular opening of the left-most acrylic plate, as seen in Fig 6.
We align the mug such that its handle is protruding from the opening.
For the weighted cup experiment, we place four lab weights, in total weighing 200g, into a plastic cup with an attached handle.

For both experiments we define a series of states with corresponding time points and generate a more comprehensive trajectory using MATLAB's cubicpolytraj function. This gives us a list of desired states, velocities and accelerations that we pass into our controller. For both trajectories we set the first state of all four modules to be completely soft ($dL = 0$) and the motor joints such that the manipulator is resting on the tabletop at the beginning of each experiment. 

\backmatter

\section{Supplementary Information}

Supplementary Text\\
Videos 1-6

\section{Acknowledgements}

This work was done with the support of National Science Foundation EFRI program under grant number 1830901 and the Gwangju Institute of Science and Technology.

\section*{Declarations}

\subsection*{Funding} 
This work was done with the support of National Science Foundation EFRI program under grant number 1830901 and the Gwangju Institute of Science and Technology.
\subsection*{Data and Code Availability} The datasets generated and/or analysed during the current study are available in the Github repository, https://github.com/zpatty/spongebot. The underlying code for this study is available in Github and can be accessed via this link https://github.com/zpatty/spongebot. 
\subsection*{Contributions} ZJP and ES contributed to conception, design, fabrication, code, algorithms, experiments, and writing. CDS and DR contributed to conception and writing.
\subsection*{Competing Interests}
The authors declare no competing interests.


\bibliography{sn-bibliography}
\pagebreak
\begin{center}
\textbf{\large Supplemental Materials: Design and Control of Modular Soft-Rigid Hybrid Manipulators with Self-Contact}
\end{center}
\setcounter{equation}{0}
\setcounter{figure}{0}
\setcounter{table}{0}
\setcounter{page}{1}
\makeatletter
\renewcommand{\theequation}{S\arabic{equation}}
\renewcommand{\thefigure}{S\arabic{figure}}
\renewcommand{\bibnumfmt}[1]{[S#1]}
\renewcommand{\citenumfont}[1]{S#1}

\section{Stability proof for state controller}\label{sec1}

For the following section, we repeat the definitions of our equations from the manuscript for completeness. To describe our state, we utilize the Piecewise Constant Curvature (PCC) kinematic model discussed in \cite{della_santina_improved_2020}, which is singularity-free. Note that the controllers could also be directly extended to systems undergoing torsion or shear deformation by employing a piecewise functional strain approach \cite{renda2020geometric}. However, the PCC approximation was sufficient for modeling the proposed robot, so we focused on it for the sake of simplicity of derivations.

The state for each module is ${q}_{\mathrm{mod},i} = [\Delta_{x,i}, \Delta_{y,i}, \delta L_{i}]^T$. Each rigid joint performs a pure rotation $\theta_i$ \cite{murray_mathematical_1994}. The total state is then ${q} = [\theta_1, {q}_{\mathrm{mod},1}, ..., \theta_n,{q}_{\mathrm{mod},n}]^T$, where $n$ is the total number of modules. The dynamics of the robot are
\begin{equation}\label{eq:dynamics}
    M({q}){\ddot q} + C({q},{\dot q}){\dot q} + G({q}) + K({q}) + D{\dot q} = A({q}){\tau} + J^T f_{\mathrm{ext}},
\end{equation}
where $M({q})$ is the inertia matrix, $C({q},{\dot q})$ is the Coriolis matrix, $G({q})$ is the gravitational force, $K({q})$ is the elastic force, $D$ is the damping matrix, $A({q})$ is the input matrix, ${\tau}$ is the input vector, $J$ is the end effector Jacobian, and $f_{\mathrm{ext}}$ is a force on the end effector, which we assume to be zero in the state controller setting. A key assumption in this work is that the discretization level is such that the number of degrees of freedom (size of $q$) is the same as that of the degrees of actuation (size of $\tau$).

We proposed in the main manuscript the following controller:
\begin{equation}\label{eq:state_c}
    {\tau} = A^{-1}(M{\ddot q_d} + C{\dot q_d} + G({q}) + K{q_d} + D{\dot q_d} + K_{\mathrm P}({q_d} - {q}) + K_{\mathrm D}({\dot q_d} - {\dot q}) + {F}_c),
\end{equation}
where ${q_d}(t)$ is the desired trajectory, $K_{\mathrm P}$ is the proportional feedback gain matrix, $K_{\mathrm D}$ is the derivative feedback gain matrix, and ${F}_c$ is a contact compensation term defined as
\begin{equation}
    {F}_{c,i} = \sigma(c_i({q}))(K{q_d} + K_{\mathrm P}{q_d} - K_{\mathrm P}{S}({q})) = \sigma(c_i({q}))(\hat K{q_d} - K_{\mathrm P}{S}({q})),
\end{equation}
where $c_i({q})$ is the function for the distance between the plates derived from forward kinematics and $\sigma(c_i({q}))$ is a sigmoid function that saturates as $c_i \rightarrow 0$:
\begin{equation}
    c_i = 2(d\frac{L_0 + \delta L_i}{\sqrt{\Delta_{x,i}^2 + \Delta_{y,i}^2}} - d)\sin(\frac{\sqrt{\Delta_{x,i}^2 + \Delta_{y,i}^2}}{6d}),
\end{equation}
\begin{equation}
    \sigma(c) = -\frac{e^{-k_c c}}{e^{-k_c c} + 1}.
\end{equation}
$k_c$ determines the steepness of the sigmoid's slope about $c=0$. 
The term ${S}({q})$ saturates the feedback contribution to ${F}_{c,i}$
\begin{equation}
    {S}({q}) = [\phi(\Delta_{x,i}), \phi(\Delta_{y,i}), \phi(\delta L_{i})]^T,
\end{equation}
where
\begin{equation}
    \phi(\Delta_{x,i}) = 2\Delta_{x,\mathrm{max}} \frac{1-e^{-\Delta_{x,i}/\Delta_{x,\mathrm{max}}}}{1+e^{-\Delta_{x,i}/\Delta_{x,\mathrm{max}}}}
\end{equation}
and $\Delta_{x,\mathrm{max}}$ is a positive constant. ${F}_c$ can then be constructed as 
\begin{equation}
    {F}_c = [0, {F}_{c,1}, ..., 0, {F}_{c,n}]^T,
\end{equation}
where the null terms correspond to the motor states $\theta_i$. Given that the functions $0<\sigma(c({q}))<1$ and $|\phi(x)|<\alpha_s$, where $\alpha_s$ is some positive constant, are bounded by definition, ${F}_c$ is a linear combination of bounded functions and is thus bounded by a constant $\alpha_{{F}_c}$
\begin{equation}
    {F}_{c,i} = \sigma(c_i({q}))(\hat K{q_d} - K_{\mathrm P}{S}({q})) \leq \lambda_{\mathrm{max}}(\hat K)||{q_d}||_{\mathrm M} + \lambda_{\mathrm{max}}(K_{\mathrm P})\alpha_s \leq \alpha_{{F}_c},
\end{equation}
where ${q_d}||_{\mathrm M}$ is the maximum norm of our desired trajectory. 


We can write the closed loop system of (\ref{eq:dynamics}) and (\ref{eq:state_c}) as
\begin{equation}\label{eq:cl}
    M({q}){\ddot e} + C({q},{\dot q}){\dot e} + \hat D{\dot e} + \hat K {e} + {F}_c = 0,
\end{equation}
where $\hat D = D + K_{\mathrm D}$, $\hat K = K + K_{\mathrm P}$, ${e} = {q} - {q_d}$. By design, the controller (\ref{eq:state_c}) will not always (and may never) have as its equilibrium ${e}=0$. However, we can derive conditions for which the closed loop system (\ref{eq:cl}) is uniformly globally bounded and uniformly globally ultimately bounded.

\begin{theorem}\label{thm1}
There exists a $K_{\mathrm{P}}$ and $K_{\mathrm{D}}$ such that the trajectories of the closed loop system (\ref{eq:cl}) are uniformly globally bounded and converge to the set $B_b = \{ e \in {R}^n \: : \: || e(t)|| \leq b\}$.
\end{theorem}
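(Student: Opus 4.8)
The plan is to run a Lyapunov argument of the kind used for PD+ regulation of Euler--Lagrange systems, modified to accommodate the persistent, non-vanishing forcing $F_c$ (which also absorbs the bounded feedforward signals $Kq_d$ and $K_{\mathrm P}q_d$). Because the closed loop (\ref{eq:cl}) is a dissipative mechanical system perturbed by a term that need not vanish at $e=0$, one cannot expect $e\to 0$; instead one shows that a suitable energy function decreases whenever $(e,\dot e)$ leaves a sufficiently large ball, which yields both uniform global boundedness and convergence to a residual set $B_b$. Concretely, I would take the candidate
\[
V(e,\dot e) = \tfrac12 \dot e^\top M(q)\dot e + \tfrac12 e^\top \hat K e + \epsilon\, e^\top M(q)\dot e ,
\]
with $\epsilon>0$ a small constant fixed later. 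Since $K\succeq 0$ (its entries on the joint coordinates vanish, the others are positive) and $K_{\mathrm P}\succ 0$, we have $\hat K=K+K_{\mathrm P}\succ 0$; using the uniform bounds $\lambda_{\mathrm{min}}(M)I\preceq M(q)\preceq \lambda_{\mathrm{max}}(M)I$ for the PCC inertia matrix, for $\epsilon$ small enough $V$ is positive definite, decrescent and radially unbounded, and is sandwiched between two quadratic forms in $\|(e,\dot e)\|$.

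Next I would differentiate $V$ along (\ref{eq:cl}), substitute $M\ddot e=-(C+\hat D)\dot e-\hat K e-F_c$, and invoke the skew-symmetry of $\dot M-2C$. The terms not carrying the factor $\epsilon$ collapse to $-\dot e^\top\hat D\dot e-\dot e^\top F_c$; the $\epsilon$-terms contribute the decisive negative term $-\epsilon\,e^\top\hat K e$ in $e$, together with the indefinite cross terms $\epsilon\dot e^\top M\dot e$, $-\epsilon e^\top\hat D\dot e$ and an $\epsilon$-multiple of $\dot e^\top C(q,\dot q)e$ (after using $\dot M=C+C^\top$).

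Then I would bound the perturbation by $\|F_c\|\le\alpha_{F_c}$ (established above), apply Young's inequality to the indefinite cross terms, and use the standard estimate $\|C(q,\dot q)x\|\le k_C\|\dot q\|\|x\|$ together with $\dot q=\dot e+\dot q_d$ and $\|\dot q_d\|\le\|\dot q_d\|_{\mathrm M}$. Choosing $K_{\mathrm D}$ large enough that $\lambda_{\mathrm{min}}(\hat D)$ dominates $\epsilon\lambda_{\mathrm{max}}(M)$ and the $\dot q_d$-dependent cross contributions, and $K_{\mathrm P}$ so that $\hat K\succ 0$, one reaches an estimate of the form $\dot V\le-\gamma_1\|\dot e\|^2-\epsilon\gamma_2\|e\|^2+\gamma_3$, hence $\dot V\le-\gamma V+\beta$ with $\gamma,\beta>0$. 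A standard comparison/ultimate-boundedness argument then gives uniform global boundedness of the trajectories and convergence to $B_b=\{e\in\mathbb{R}^n:\|e\|\le b\}$, with the radius $b$ increasing in $\alpha_{F_c}$, $\|q_d\|_{\mathrm M}$, $\|\dot q_d\|_{\mathrm M}$ and decreasing in the feedback gains; this exhibits the required $K_{\mathrm P}$, $K_{\mathrm D}$.

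\textbf{The main obstacle} is making the bound genuinely global rather than local: the cross term proportional to $\dot e^\top C(q,\dot q)e$ is cubic in the state (it carries a factor $\|\dot q\|\,\|\dot e\|\,\|e\|$), so it cannot be absorbed by the quadratic negative terms for arbitrarily large $(e,\dot e)$ unless $M$, $C$ and $\dot M$ admit uniform bounds. I would close this gap by exploiting that the rigid plates physically constrain the PCC variables $(\Delta_{x,i},\Delta_{y,i},\delta L_i)$ and the joint angles to a compact set, so these matrices are uniformly bounded along any physical motion; alternatively one states those boundedness conditions explicitly as hypotheses. The remainder is the routine PD+ bookkeeping sketched above.
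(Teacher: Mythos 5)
Your proposal is correct in outline but takes a genuinely different route from the paper. The paper uses the plain mechanical-energy candidate $V=\tfrac12\dot e^\top M\dot e+\tfrac12 e^\top\hat K e$ (no cross term), for which skew-symmetry of $\dot M-2C$ collapses $\dot V$ to $-\dot e^\top\hat D\dot e-\dot e^\top F_c$; it then invokes Khalil's uniform ultimate boundedness theorem with the threshold $\mu=\alpha_{F_c}/\lambda_{\mathrm{min}}(\hat D)$ measured in $\|\dot e\|$ alone, and extracts the bound $b=\alpha_1^{-1}(\alpha_2(\mu))$ from quadratic/quartic sandwich functions on $V$ (the quartic upper bound absorbing the growth $\|M\|\le\gamma_0+\gamma_1\|q\|+\gamma_2\|q\|^2$, so no uniform bound on $M$ is needed). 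Your cross-term candidate $V+\epsilon e^\top M\dot e$ instead buys a genuinely negative term in $\|e\|^2$ and hence the stronger differential inequality $\dot V\le-\gamma V+\beta$: this repairs what is arguably the weak point of the paper's argument, namely that $\dot V\le-W_3$ is only verified on $\{\|\dot e\|\ge\mu\}$ rather than on $\{\|(e,\dot e)\|\ge\mu\}$ as Khalil's hypothesis literally requires. The price you pay is exactly the obstacle you identify: the cubic term $\epsilon\,\dot e^\top C(q,\dot q)e$ and the $\epsilon e^\top M\dot e$ term need uniform bounds on $M$ and $C$, which do not hold globally for this parametrization (the paper's own cited estimate has $\|M\|$ growing quadratically in $\|q\|$), so your conclusion is global only after restricting to the compact physically admissible configuration set or adding boundedness hypotheses. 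In short: the paper's proof is shorter and literally global but leans on a loosely applied boundedness theorem; yours is the standard strict-Lyapunov PD+ argument, gives an explicit decay rate and a residual radius that shrinks with the gains, but is semiglobal unless the compactness assumption is made explicit.
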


\begin{proof}
Consider the Lyapunov candidate 
\begin{equation}
    V({e},{\dot e}) = \frac{1}{2}{\dot e}^T M({q}) {\dot e} + \frac{1}{2}{e}^T \hat K {e}.
\end{equation}
It is immediate to check that this is a proper candidate - i.e., positive definite in the equilibrium $(0,0)$ - as $V$ is the summation of two quadratic forms with $M$ and $\hat{K}$ positive definite matrices.

From Khalil \cite{khalil2002control}, to show uniform boundedness, we must show that there is some $W_3(x)$, $\mu$ such that $\dot V(t,x) \leq -W_3(x), \; \forall \, ||x|| \geq \mu > 0$. Thus
\begin{equation}
\begin{split}
    \dot V({e},{\dot e}) &= \frac{1}{2}{\dot e}^T \dot M {\dot e} + {\dot e}^T M {\ddot e} + {\dot e}^T \hat K {e}\\
    &= \frac{1}{2}{\dot e}^T \dot M {\dot e} - {\dot e}^T (C{\dot e} + \hat D{\dot e} + \hat K {e} + {F}_c) + {\dot e}^T \hat K {e}\\
    &= - {\dot e}^T \hat D{\dot e} - {\dot e}^T {F}_c,
\end{split}
\end{equation}
where we substituted the closed loop dynamics (\ref{eq:cl}) and used the property ${\dot e}^T \dot M {\dot e} - 2{\dot e}^T C{\dot e} = 0$ \cite{murray_mathematical_1994}. From this we can readily see that 
\begin{equation}
    \dot V \leq - ||{\dot e}||^2 \lambda_{\mathrm{min}}(\hat D), \quad \forall \: ||{\dot e}|| \geq \frac{\alpha_{{F}_c}}{\lambda_{\mathrm{min}}(\hat D)} = \mu.
\end{equation}
As this holds globally, we can conclude that closed loop system (\ref{eq:cl}) is globally uniformly bounded. To derive the ultimate bound, we need to find class $\mathcal K$ functions $\alpha_1(|| y||)$, $\alpha_2(|| y||)$ such that $ \alpha_1(|| y||) \leq V(t, y) \leq \alpha_2(|| y||)$. Taking $ y = [{e}, {\dot e}]^T$ and using the fact from \cite{della_santina_model-based_2020} that $\gamma_3 + \gamma_1 || e|| + \gamma_2 || e||^2 \geq \gamma_0 + \gamma_1 || q|| + \gamma_2 || q||^2 \geq ||M|| \geq 0$, we get
\begin{equation}
\begin{split}
    \alpha_1(|| y||) &= \frac{1}{2}(\lambda_{\mathrm{min}}(\hat K) + \sigma_{\mathrm{min}}(M))|| y||^2 \leq V\\
    \alpha_2(|| y||) &= \frac{1}{2} (\lambda_{\mathrm{max}}(\hat K)|| y||^2 + \gamma_3 || y||^2 + \gamma_1 || y||^3 + \gamma_2 || y||^4) \geq V.
\end{split}
\end{equation}
We can then find ultimate bound $b$, proving the thesis:
\begin{equation}
    b = \alpha_1^{-1}(\alpha_2(\mu)) = \sqrt{\frac{(\lambda_{\mathrm{max}}(\hat K) + \gamma_3)\mu^2 + \gamma_1 \mu^3 + \gamma_2 \mu^4}{\lambda_{\mathrm{min}}(\hat K) + \sigma_{\mathrm{min}}(M)}}.
\end{equation}
\end{proof}




\end{document}